\documentclass[11pt]{article}
\usepackage{authblk}

\usepackage[margin=1in]{geometry}
\usepackage[T1]{fontenc}
\usepackage[utf8]{inputenc}
\usepackage{lmodern}
\usepackage{microtype}
\usepackage{setspace}
\usepackage{parskip}

\usepackage{amsmath,amssymb,amsthm,mathtools}
\usepackage{bm}

\newtheorem{theorem}{Theorem}

\newtheorem{proposition}[theorem]{Proposition}
\newtheorem{corollary}[theorem]{Corollary}
\newtheorem{definition}[theorem]{Definition}
\newtheorem{example}[theorem]{Example}
\newtheorem{remark}[theorem]{Remark}

\usepackage{graphicx}
\usepackage{caption}
\usepackage{subcaption}
\usepackage{float}
\usepackage{tikz}
\usetikzlibrary{arrows.meta,positioning,calc}
\usepackage{booktabs}
\usepackage{array}

\usepackage{hyperref}
\hypersetup{
  colorlinks=true,
  linkcolor=blue!70!black,
  citecolor=blue!70!black,
  urlcolor=blue!70!black,
  pdftitle={Operational Non-Commutativity in Metacognitive Evaluation},
  pdfauthor={Enso O. Torres Alegre}
}

\newcommand{\Sspace}{\mathcal{S}}
\newcommand{\Eset}{\mathcal{E}}

\newcommand{\Bspace}{\mathcal{B}}

\newcommand{\id}{\mathrm{id}}
\newcommand{\R}{\mathbb{R}}

\DeclareMathOperator{\E}{\mathbb{E}}

\title{\textbf{Operational Noncommutativity in Sequential Metacognitive Judgments}}

\author[1]{Enso O. Torres Alegre\textsuperscript{*}}
\author[2]{Diana E. Mora Jiménez}

\affil[1]{Pontifical Catholic University of Chile, Santiago, Chile}
\affil[2]{University of Notre Dame, South Bend, IN, United States}

\date{}

\begin{document}
\maketitle

\begin{center}
\small \textsuperscript{*}Corresponding: \texttt{entos930@gmail.com}
\end{center}

\begin{abstract}
Metacognition---the monitoring and regulation of one’s own cognitive processes---is inherently sequential: an agent evaluates an internal state, updates it, and may then re-evaluate under shifted criteria. Order effects in cognition have been widely documented, yet it remains unclear whether such effects merely reflect classical state changes or indicate a deeper structural non-commutativity. We develop an operational framework that makes this distinction explicit. In our formulation, metacognitive evaluations act as state-transforming operations on an internal state space equipped with probabilistic readouts, separating the back-action of evaluation from its observable output.

We show that order dependence obstructs any faithful Boolean--commutative representation, and we address the stronger question of whether observed order effects can always be explained by expanding the state space with classical latent variables. To formalize this issue, we introduce two explicit assumptions---counterfactual definiteness and evaluation non-invasiveness---under which the existence of a joint distribution over all sequential readouts implies a family of testable constraints on pairwise sequential correlations. Violation of these constraints rules out any classical non-invasive account and certifies what we term \emph{genuine non-commutativity}.

We provide an explicit three-dimensional rotation model with fully worked numerical examples that exhibits such violations, and we outline a behavioral paradigm---sequential confidence, error-likelihood, and feeling-of-knowing judgments following a perceptual decision---together with the corresponding empirical test. No claim is made regarding quantum physical substrates; the framework is purely operational and algebraic.
\end{abstract}

\section{Introduction}\label{sec:intro}

The architecture of metacognition is often described through Nelson
and Narens' influential two-level picture \cite{NelsonNarens1990,
NelsonNarens1994}: a \emph{meta-level} that monitors and controls
an \emph{object-level} cognitive process. Computationally, modern
models have cast metacognitive evaluation---confidence ratings,
feelings of knowing, error monitoring---within Bayesian and
signal-detection frameworks with considerable success
\cite{FlemingLau2014, FlemingDaw2017, ManiscalcoLau2012,
Pouget2016}.

A feature that has received comparatively less formal attention is
the \emph{sequentiality} of metacognitive evaluation. In practice
an agent does not simply emit a single self-assessment; rather,
self-evaluation unfolds over time, and importantly, the act of
evaluating can itself transform the state being evaluated. This
observation is not exotic---it is essentially the recognition that
introspection can be \emph{reactive}
\cite{Schooler2002, NisbettWilson1977}. There is now substantial
evidence that metacognitive judgments interact with ongoing
processing: confidence ratings affect subsequent memory encoding
\cite{DoubleBirney2019}, post-decision evidence accumulation
modifies confidence trajectories \cite{Resulaj2009},
and the framing of self-evaluation can alter downstream behavior
\cite{Ais2016}. What has not been clearly laid out, as far as we
know, is the question of what structural consequences follow from
this reactivity when it takes the form of order dependence between
different types of metacognitive evaluation.

The study of order effects in judgment has a large empirical
literature \cite{Trueblood2011, WangBusemeyer2013,
PothosBusemeyer2013}. Busemeyer and Bruza \cite{BusemeyerBruza2012}
and Khrennikov \cite{Khrennikov2010} developed quantum-probability
models of cognition that naturally accommodate non-commutativity.
Our approach shares the mathematical observation that order effects
point toward non-commutative structure, but differs in two respects.
First, we focus specifically on \emph{metacognitive} evaluations---
higher-order assessments of one's own processing---rather than
first-order judgments or preferences. Second, we go beyond the
observation that order effects are incompatible with commutative
frameworks; we develop a criterion, grounded in two explicit
assumptions, for when order effects are \emph{genuinely}
non-commutative rather than merely artifacts of incomplete state
descriptions. This criterion yields testable inequalities whose
violation would constitute evidence for irreducible
non-commutativity in metacognitive evaluation.

The paper is organized as follows. Section~\ref{sec:framework}
lays out the operational framework. Section~\ref{sec:obstruction}
proves the basic obstruction result. Section~\ref{sec:genuine}
develops the distinction between apparent and genuine
non-commutativity, culminating in a family of testable
inequalities. Section~\ref{sec:model} constructs an explicit
minimal model with complete numerical calculations.
Section~\ref{sec:empirical} sketches an experimental paradigm.
Section~\ref{sec:discussion} discusses relations to existing work,
and Section~\ref{sec:conclusion} concludes.

\section{Operational framework}\label{sec:framework}

We aim for a framework that is assumption-light while still having
enough structure to support non-trivial derivations. The key
ingredients are an internal state space, a collection of
metacognitive evaluations acting on it, and readout functions that
yield operationally observable outputs.

\begin{definition}[Internal state space]\label{def:state}
An \emph{internal state space} is a pair $(\Sspace, \Sigma)$
where $\Sspace$ is a nonempty set and $\Sigma$ is a
$\sigma$-algebra on $\Sspace$. An \emph{internal state} is a
probability measure $\mu$ on $(\Sspace,\Sigma)$. We write
$\mathcal{P}(\Sspace)$ for the convex set of all such measures.
\end{definition}

The use of probability measures rather than point-valued states
captures the realistic scenario in which the meta-level does not
have full access to the object-level configuration.

\begin{definition}[Metacognitive evaluation]\label{def:eval}
A \emph{metacognitive evaluation} is a pair $E=(T_E,\pi_E)$
where:
\begin{enumerate}
  \item $T_E:\mathcal{P}(\Sspace)\to\mathcal{P}(\Sspace)$ is a
  measurable state-transformation, representing the back-action
  of the evaluation on the internal state.
  \item $\pi_E:\mathcal{P}(\Sspace)\to[0,1]$ is a measurable
  readout function, representing the operationally accessible
  scalar output (e.g.\ a confidence level or error estimate).
\end{enumerate}
We denote by $\Eset$ a collection of metacognitive evaluations.
\end{definition}

This formulation separates the two aspects of evaluation: producing
an observable output ($\pi_E$) and disturbing the internal state
($T_E$). The disturbance component is precisely what makes
sequential composition nontrivial; without it ($T_E=\id$ for all
$E$) everything would commute trivially.

\begin{definition}[Sequential composition]\label{def:composition}
For $E_1=(T_1,\pi_1)$ and $E_2=(T_2,\pi_2)$ in $\Eset$, the
sequential application ``$E_1$ then $E_2$'' acts on an initial
state $\mu\in\mathcal{P}(\Sspace)$ as follows: first $T_1$ is
applied, yielding $\mu'=T_1(\mu)$ with readout $r_1=\pi_1(\mu)$;
then $T_2$ is applied to $\mu'$, yielding $\mu''=T_2(T_1(\mu))$
with readout $r_2=\pi_2(T_1(\mu))$. We write $E_2\circ E_1$ for
the composite transformation $T_2\circ T_1$, and record the
readout pair $(r_1, r_2)$.
\end{definition}

\begin{definition}[Order dependence]\label{def:orderdep}
$\Eset$ exhibits \emph{order dependence} with respect to a state
$\mu\in\mathcal{P}(\Sspace)$ if there exist $E_1,E_2\in\Eset$ such
that at least one of the following holds:
\begin{enumerate}
  \item[\textbf{(S)}] \textbf{Strong:}
  $T_2(T_1(\mu))\neq T_1(T_2(\mu))$.
  \item[\textbf{(W)}] \textbf{Weak:}
  $T_2(T_1(\mu))= T_1(T_2(\mu))$ but the second-position
  readouts differ: $\pi_2(T_1(\mu))\neq \pi_1(T_2(\mu))$.
\end{enumerate}
\end{definition}

Condition (S) says the final internal state depends on order.
Condition (W) says the states might coincide but the readouts
associated with the second-position evaluation differ depending
on what came first. Both are, in principle, empirically accessible
through sequential-judgment paradigms.

Figure~\ref{fig:concept} summarizes the sequential evaluation structure: an evaluation produces an observable readout while also applying back-action that changes the internal state and can create order effects.

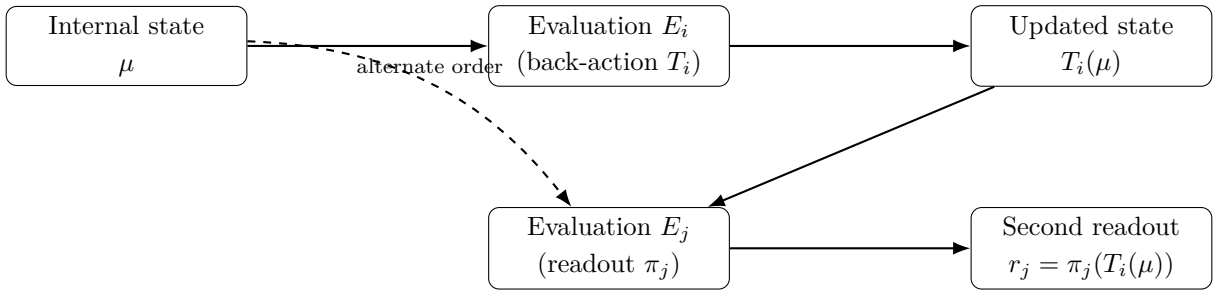
\begin{figure}[H]
\centering
\begin{tikzpicture}[
    node distance=2.0cm,
    every node/.style={font=\small},
    box/.style={rectangle, draw, rounded corners, minimum width=3.2cm, minimum height=0.95cm, align=center},
    arrow/.style={-Latex, thick}
]
\node[box] (mu) {Internal state\\ $\mu$};
\node[box, right=3.2cm of mu] (Ei) {Evaluation $E_i$\\ (back-action $T_i$)};
\node[box, right=3.2cm of Ei] (mu2) {Updated state\\ $T_i(\mu)$};

\node[box, below=1.6cm of Ei] (Ej) {Evaluation $E_j$\\ (readout $\pi_j$)};
\node[box, right=3.2cm of Ej] (rj) {Second readout\\ $r_j=\pi_j(T_i(\mu))$};

\draw[arrow] (mu) -- (Ei);
\draw[arrow] (Ei) -- (mu2);
\draw[arrow] (mu2) -- (Ej);
\draw[arrow] (Ej) -- (rj);

\draw[arrow, dashed] (mu) to[bend left=25] node[above, font=\scriptsize]{alternate order} (Ej);
\end{tikzpicture}
\caption{Sequential metacognitive evaluation as a state-transforming operation with an observable readout. The key empirical object is the dependence of the second-position readout on what came first.}
\label{fig:concept}
\end{figure}

\section{Obstruction to Boolean--commutative representation}
\label{sec:obstruction}

We first establish a baseline result whose content is admittedly
close to definitional, but which serves to fix notation and
clarify exactly which combination of requirements is inconsistent
with order dependence.

\begin{definition}[Boolean--commutative representation]
\label{def:boolrep}
A \emph{Boolean--commutative (BC) representation} of
$(\Sspace,\Sigma,\Eset)$ is a tuple $(\Bspace, \cdot, P, \phi)$
where:
\begin{enumerate}
  \item $\Bspace$ is a Boolean algebra of events.
  \item $\cdot:\Bspace\times\Bspace\to\Bspace$ is a commutative
  binary operation (sequential product): $a\cdot b = b\cdot a$.
  \item $P$ is a probability measure on $\Bspace$.
  \item $\phi:\Eset\to\Bspace$ assigns each evaluation an event
  $\phi(E)=[E]$, with the faithfulness condition: if the
  sequential readout statistics of $E_1$-then-$E_2$ differ
  from $E_2$-then-$E_1$, then $P([E_1]\cdot[E_2])\neq
  P([E_2]\cdot[E_1])$.
\end{enumerate}
\end{definition}

\begin{theorem}[Obstruction]\label{thm:obstruction}
If $(\Sspace,\Sigma,\Eset)$ exhibits order dependence, then no BC
representation exists.
\end{theorem}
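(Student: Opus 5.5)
The plan is a direct proof by contradiction, using nothing beyond Definitions~\ref{def:orderdep} and~\ref{def:boolrep}. Suppose $(\Sspace,\Sigma,\Eset)$ exhibits order dependence with respect to some state $\mu$, and suppose also that a BC representation $(\Bspace,\cdot,P,\phi)$ exists. I will show that commutativity of the sequential product and faithfulness of $\phi$ cannot hold at the same time for the pair of evaluations that witnesses order dependence.

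\textbf{Step 1: order dependence gives differing sequential readout statistics.} Fix $E_1,E_2\in\Eset$ witnessing order dependence.
\begin{itemize}
  \item In case (W), the second-position readouts differ, $\pi_2(T_1(\mu))\neq\pi_1(T_2(\mu))$. The readout pair recorded by Definition~\ref{def:composition} for ``$E_1$ then $E_2$'' therefore differs from the pair for ``$E_2$ then $E_1$''.
  \item In case (S), only the final states $T_2(T_1(\mu))\neq T_1(T_2(\mu))$ are known to differ. I will read ``sequential readout statistics'' in the sense Definition~\ref{def:orderdep} intends: the distribution of readouts from the two orderings, including any subsequent evaluation applied to the resulting state. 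Distinct final states are operationally distinguished by some readout, so the statistics differ. If one insists on the recorded pair alone, I will note the convention explicitly: order dependence is taken to mean that the operational statistics of the two orderings differ. This is what the faithfulness clause is keyed to.
\end{itemize}

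\textbf{Step 2: faithfulness forces an inequality.} By the faithfulness condition in Definition~\ref{def:boolrep}, Step~1 gives
\[
P([E_1]\cdot[E_2])\neq P([E_2]\cdot[E_1]).
\]

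\textbf{Step 3: commutativity contradicts it.} The product $\cdot$ is commutative, so $[E_1]\cdot[E_2]=[E_2]\cdot[E_1]$ as elements of $\Bspace$. Since $P$ is a function on $\Bspace$, the two probabilities are equal. This contradicts Step~2, so no BC representation exists.

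\textbf{Main obstacle.} Steps 2 and 3 are essentially definitional. The only delicate point is case (S) in Step~1: bridging from a difference in internal states to a difference in observable statistics. My first approach is to declare the operational reading of ``sequential readout statistics'' used in the faithfulness clause. Alternatively, I would restrict the theorem's conclusion to the readout-distinguishable instances of (S), noting that (W) is covered unconditionally.
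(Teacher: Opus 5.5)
Your proof is the paper's: commutativity of $\cdot$ forces $P([E_1]\cdot[E_2])=P([E_2]\cdot[E_1])$, while order dependence together with faithfulness forces these to differ, which is the contradiction. Your extra care about case (S) is warranted, because the paper tacitly assumes that order dependence yields differing readout statistics. However, your sentence ``distinct final states are operationally distinguished by some readout'' does not follow from the definitions: non-commuting $T_i$ with identical constant readouts satisfy (S) yet admit a trivial BC representation. So what actually closes case (S) is your stated convention, or your restriction to readout-distinguishable instances, not that sentence.
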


\begin{proof}
Commutativity of $\cdot$ forces $[E_1]\cdot[E_2]=[E_2]\cdot[E_1]$,
hence $P([E_1]\cdot[E_2])=P([E_2]\cdot[E_1])$. Order dependence,
combined with the faithfulness requirement, requires these
probabilities to differ. Contradiction.
\end{proof}

\begin{remark}\label{rem:trivial}
As noted, Theorem~\ref{thm:obstruction} is essentially a
reformulation of its hypothesis. Its function in the paper is
to make the logical structure explicit and to motivate the
non-trivial question addressed next: can the order dependence
always be ``explained away'' by expanding the state space?
\end{remark}

\section{Genuine vs.\ apparent non-commutativity}
\label{sec:genuine}

A natural objection to Theorem~\ref{thm:obstruction} is that the
observed order dependence might reflect an incomplete state
description rather than any fundamental non-commutativity: include
the right latent variables, and the evaluations might commute
after all. This objection is serious and must be addressed head-on.
In the quantum foundations literature, the analogous question---
whether quantum correlations can be reproduced by a classical
hidden-variable model---requires substantive no-go theorems
(Bell \cite{Bell1964}, Fine \cite{Fine1982}, Leggett--Garg
\cite{LeggettGarg1985}). We now develop the corresponding
argument for our setting.

\subsection{Two assumptions}\label{subsec:assumptions}

We formulate two explicit assumptions about what a ``classical
account'' of sequential metacognitive evaluation would look like.

\begin{definition}[Counterfactual definiteness (CFD)]
\label{def:cfd}
For each trial (i.e.\ each instance of the agent in state $\mu$)
and each evaluation $E_j\in\Eset$, there exists a definite value
$R_j\in[0,1]$ that the readout \emph{would} take if $E_j$ were
performed, regardless of which other evaluations are or are not
performed on that trial. The agent carries, so to speak, a
complete set of pre-determined readout values
$(R_1, R_2, \ldots, R_n)$ for all evaluations in $\Eset$.
\end{definition}

\begin{definition}[Evaluation non-invasiveness (ENI)]
\label{def:eni}
Performing evaluation $E_i$ does not alter the pre-existing
readout values $R_j$ for $j\neq i$. That is, the ``label'' that
$E_j$ would produce is the same whether or not $E_i$ has been
applied beforehand.
\end{definition}

These two assumptions together constitute what we will call a
\emph{non-invasive classical (NIC) model}. They are the natural
cognitive analogues of the assumptions underlying Leggett--Garg
inequalities in physics \cite{LeggettGarg1985, EmaryLambert2014},
adapted to the metacognitive setting.

\begin{remark}
CFD and ENI are strong assumptions. They are \emph{not} meant as
empirical claims; rather, they define a class of models (NIC
models) which we will then test against data. The point is to
derive consequences of these assumptions that can be empirically
checked. If the consequences are violated, then at least one of
CFD or ENI must fail---which is what we mean by ``genuine
non-commutativity.''
\end{remark}

\subsection{Joint distribution and pairwise inequalities}
\label{subsec:joint}

Under CFD and ENI, a remarkable simplification occurs.

\begin{proposition}[Existence of joint distribution under NIC]
\label{prop:joint}
Assume CFD and ENI hold for evaluations $\{E_1,\ldots,E_n\}
\subseteq\Eset$ and a state $\mu$. Then there exists a joint
probability distribution
$$
  P(R_1, R_2, \ldots, R_n)
$$
over the readout values of all evaluations, such that every
pairwise sequential experiment---performing $E_i$ then
$E_j$---samples from the marginal $P(R_i, R_j)$.
\end{proposition}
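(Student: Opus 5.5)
The plan is to build the joint distribution directly from the hidden-variable structure that CFD supplies, and then use ENI to show that sequential experiments only ever read off coordinates of that structure. First I will formalize CFD as a random variable. Take the ensemble of trials (instances of the agent prepared in state $\mu$) as a probability space $(\Omega,\mathcal{F},Q)$. CFD says that each $\omega\in\Omega$ carries a definite vector $\bigl(R_1(\omega),\ldots,R_n(\omega)\bigr)\in[0,1]^n$. I will assume each $R_j$ is measurable, which is the implicit content of saying the value is definite and could be sampled. Then
$$
P := Q\circ (R_1,\ldots,R_n)^{-1}
$$
is a Borel probability measure on $[0,1]^n$. This is the candidate joint distribution, and it is defined before any evaluation is performed.

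Second, I will analyse a single sequential experiment ``$E_i$ then $E_j$'' on one trial $\omega$. The first readout is the value $E_i$ would produce on an unperturbed trial, so by CFD it equals $R_i(\omega)$. The second evaluation is applied after the back-action $T_i$. ENI states that this back-action does not change the pre-assigned label of $E_j$, $j\neq i$, so the second readout is still $R_j(\omega)$. Hence the recorded pair is the map $\omega\mapsto (R_i(\omega),R_j(\omega))$. Its law under $Q$ is the pushforward of $P$ under the coordinate projection $[0,1]^n\to[0,1]^2$, $(r_1,\ldots,r_n)\mapsto (r_i,r_j)$, which is the marginal $P(R_i,R_j)$. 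The same reasoning with the roles of $i$ and $j$ swapped shows that ``$E_j$ then $E_i$'' samples from the same two-dimensional marginal, with the coordinates listed in the opposite order. All pairs are therefore governed by one common $P$.

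The step I expect to be the main obstacle is the conceptual one: the sampling ensemble for each pairwise experiment must be the same $Q$. Different pairwise experiments are run on different trials, so I need the statement that every trial is drawn i.i.d.\ from the same preparation $\mu$. Equivalently, the choice of which pair to perform is made independently of the hidden values; this is a freedom-of-choice or no-conspiracy condition implicit in ``each instance of the agent in state $\mu$.'' I will state this explicitly as part of what CFD means, namely that the value assignment is a property of the preparation and not of the measurement context. With that in place the proposition follows. I will also note that $P$ need not be unique, since only its two-dimensional marginals are constrained. That non-uniqueness is harmless, because the subsequent inequalities only require that some such $P$ exists.
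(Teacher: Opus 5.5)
Your proposal is correct and follows the paper's own argument. CFD supplies a per-trial tuple $(R_1,\ldots,R_n)$ whose law is the joint distribution, and ENI ensures that the second-position readout in ``$E_i$ then $E_j$'' is still $R_j(\omega)$, so the recorded pair has the bivariate marginal as its law.

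The paper's proof leaves several of your details implicit in the phrase ``each instance of the agent in state $\mu$'': the measurability assumption, the explicit pushforward $P=Q\circ(R_1,\ldots,R_n)^{-1}$, and above all the freedom-of-choice requirement that every pairwise condition samples trials from the same ensemble $Q$. Stating them strengthens the same route rather than changing it.
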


\begin{proof}
CFD guarantees that each trial has a well-defined tuple
$(R_1,\ldots,R_n)$. ENI guarantees that the value $R_j$ observed
when $E_j$ is performed second (after $E_i$) is the same $R_j$
that would have been observed had $E_j$ been performed in
isolation or first. Therefore the pair $(R_i,R_j)$ observed in
the $E_i$-then-$E_j$ condition is a sample from a single,
well-defined bivariate marginal of the joint distribution
$P(R_1,\ldots,R_n)$, which exists because each trial defines
the full tuple.
\end{proof}

The existence of the joint distribution has testable consequences.
We now derive these for the case $n=3$ evaluations with
binarized readouts (to keep the combinatorics transparent), and
then state the continuous version.

\subsubsection{Binarized readouts}

Threshold each readout at $1/2$: define $A_j = \mathbf{1}[R_j
\geq 1/2] \in \{0,1\}$. Under the NIC joint distribution, the
pairwise disagreement probabilities
$$
  d_{ij} = P(A_i \neq A_j)
$$
satisfy the triangle inequality for the Hamming pseudo-metric:

\begin{theorem}[NIC inequality, binary case]\label{thm:NIC_binary}
Under CFD and ENI for three evaluations $E_1,E_2,E_3$:
\begin{align}
  d_{12} + d_{23} &\geq d_{13}, \label{eq:tri1}\\
  d_{12} + d_{13} &\geq d_{23}, \label{eq:tri2}\\
  d_{13} + d_{23} &\geq d_{12}. \label{eq:tri3}
\end{align}
\end{theorem}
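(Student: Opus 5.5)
The plan is to reduce all three inequalities to a pointwise statement about the Hamming distance on $\{0,1\}$ and then take expectations under the single joint law supplied by Proposition~\ref{prop:joint}. Under CFD and ENI, Proposition~\ref{prop:joint} gives a joint distribution $P(R_1,R_2,R_3)$ such that each pairwise sequential experiment samples from the corresponding bivariate marginal. Applying the measurable map $R_j\mapsto A_j=\mathbf{1}[R_j\geq 1/2]$ coordinatewise, we obtain a joint distribution of $(A_1,A_2,A_3)$ on $\{0,1\}^3$. The point to stress is that each $d_{ij}=P(A_i\neq A_j)$ is then a marginal of this \emph{one} distribution, whichever pairwise order was used to estimate it. This is exactly where ENI enters: it removes any dependence of $d_{ij}$ on the sequential condition.

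The key step is a pointwise inequality. For any $a_1,a_2,a_3\in\{0,1\}$,
\begin{equation*}
\mathbf{1}[a_1\neq a_3]\leq \mathbf{1}[a_1\neq a_2]+\mathbf{1}[a_2\neq a_3].
\end{equation*}
To see this, note that the left side is nonzero only when $a_1\neq a_3$. In that case $a_2$ cannot equal both $a_1$ and $a_3$, so at least one indicator on the right equals $1$. The other two inequalities follow by permuting indices. Equivalently, one can enumerate the eight configurations: each has either zero or exactly two disagreeing pairs, so no single disagreement can exceed the sum of the other two.

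The final step is to take expectations of the pointwise inequality under the joint law of $(A_1,A_2,A_3)$. Linearity of expectation turns each indicator into the corresponding $d_{ij}$, which yields \eqref{eq:tri1}, and \eqref{eq:tri2} and \eqref{eq:tri3} follow in the same way.

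I expect no step to be technically hard. The only delicate point is conceptual: justifying that the three $d_{ij}$, which are estimated in different experimental conditions, are marginals of a common distribution. I will handle this by citing Proposition~\ref{prop:joint} and noting that the binarization is a deterministic function of each $R_j$ alone, so it preserves the common joint structure.
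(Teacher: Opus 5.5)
Your proposal is correct and follows the paper's argument: the paper's event inclusion $\{A_1\neq A_3\}\subseteq\{A_1\neq A_2\}\cup\{A_2\neq A_3\}$ plus subadditivity is the same thing as your pointwise indicator inequality integrated against the common joint law, with the other two inequalities obtained by relabeling. Your explicit appeal to Proposition~\ref{prop:joint} to put all three $A_j$ on one probability space spells out what the paper's proof simply assumes with the phrase ``defined on a common probability space.''
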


\begin{proof}
For any three binary random variables $A_1,A_2,A_3$ defined on a
common probability space,
\begin{align*}
  \{A_1\neq A_3\} &\subseteq \{A_1\neq A_2\}\cup\{A_2\neq A_3\}.
\end{align*}
This is because if $A_1\neq A_3$, then either $A_1\neq A_2$ or
$A_2\neq A_3$ (or both); if both $A_1=A_2$ and $A_2=A_3$, then
$A_1=A_3$, a contradiction. Taking probabilities:
$d_{13}\leq d_{12}+d_{23}$. The other two inequalities follow
by relabeling.
\end{proof}

\subsubsection{Continuous readouts}

For continuous readouts we use the mean second-position readout
as the observable quantity. Define
$$
  C_{ij} = \E\bigl[\pi_{E_j}(T_i(\mu))\bigr],
$$
i.e., $C_{ij}$ is the expected readout of $E_j$ when $E_i$ is
performed first. Under a NIC model, ENI implies
$C_{ij}=\E[R_j]$ regardless of $i$; in other words, the mean
readout of $E_j$ does not depend on which evaluation preceded it.

\begin{corollary}[NIC equality, continuous case]
\label{cor:NIC_cont}
Under CFD and ENI, for any $E_i, E_k \in \Eset$ and any $E_j$:
\begin{equation}\label{eq:NIC_eq}
  C_{ij} = C_{kj}.
\end{equation}
\end{corollary}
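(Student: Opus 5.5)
The plan is to derive the equality directly from Proposition~\ref{prop:joint}, reading $C_{ij}$ as an expectation over trials of the second-position readout. First, fix $E_i$ and $E_j$ and look at the sequential experiment ``$E_i$ then $E_j$''. By Proposition~\ref{prop:joint}, under CFD and ENI each trial carries a full tuple $(R_1,\ldots,R_n)$ drawn from a single joint law $P$. The pair recorded in this condition is $(R_i,R_j)$ evaluated on that same tuple. In particular, the second-position readout of $E_j$ on a given trial equals the pre-existing value $R_j$ of that trial. This is exactly what ENI guarantees: applying $E_i$ beforehand does not alter $R_j$.

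Second, take expectations. The random variable whose mean defines $C_{ij}$ coincides, trial by trial, with $R_j$. Its distribution is therefore the $j$-th one-dimensional marginal of $P$, obtained by marginalizing the bivariate marginal $P(R_i,R_j)$ over $R_i$. Hence $C_{ij}=\E_P[R_j]$, and this is finite since $R_j\in[0,1]$. The right-hand side involves only $P$ and $j$, not $i$. Repeating the argument with $E_k$ in place of $E_i$ gives $C_{kj}=\E_P[R_j]$, and \eqref{eq:NIC_eq} follows. The case $i=j$ or $k=j$ should be handled the same way, because ENI is stated for $j\neq i$. The cleanest treatment is to note that CFD fixes a single value $R_j$ per trial, so repeating $E_j$ returns $R_j$, provided we read CFD as also covering self-repetition. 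Otherwise the statement should be restricted to $i,k\neq j$.

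The main obstacle is interpretive rather than computational. As written, $C_{ij}=\E[\pi_{E_j}(T_i(\mu))]$ is a deterministic quantity in the operational model of Definition~\ref{def:eval}. The NIC model instead speaks of trial-level values $R_j$. The key step is to make explicit the identification that, in the NIC description, the operationally observed second readout on each trial is the random variable $R_j$, so that the expectation is taken over trials with respect to $P$. Once that bridge is stated, the proof is a single marginalization. I would state this identification as the content of ENI applied at the level of means, and remark that the corollary therefore needs only the one-dimensional marginals of $P$, a weaker consequence than the full joint distribution.
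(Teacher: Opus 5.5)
Your proposal is correct and follows essentially the paper's own argument: ENI identifies the second-position readout of $E_j$ with the pre-existing value $R_j$ on every trial, so taking expectations gives $C_{ij}=\E[R_j]=C_{kj}$. Your extra remarks are sound refinements that the paper glosses over: the case $i=j$ is not covered by ENI as stated, and $\pi_{E_j}(T_i(\mu))$ must be read as a trial-level random variable for the expectation to be meaningful.
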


\begin{proof}
ENI states that performing $E_i$ first does not change the readout
value $R_j$. Hence $\pi_{E_j}(T_i(\mu))=R_j=\pi_{E_j}(T_k(\mu))$
for every trial, and taking expectations gives the result.
\end{proof}

\begin{remark}
Corollary~\ref{cor:NIC_cont} is stronger than the binary
inequalities---it asserts exact equality, not merely an
inequality. In practice, statistical testing would compare
$C_{ij}$ and $C_{kj}$ and reject the NIC hypothesis if they
differ significantly. This is the continuous analogue of the
binary triangle inequalities, and is in fact a more directly
testable prediction.
\end{remark}

\begin{definition}[Genuine non-commutativity, revised]
\label{def:genuine}
Order dependence in $(\Sspace,\Sigma,\Eset)$ is \emph{genuine}
if no NIC model (i.e.\ no model satisfying both CFD and ENI) can
reproduce the observed sequential readout statistics. Empirically,
this is witnessed by either (i) violation of
inequalities~\eqref{eq:tri1}--\eqref{eq:tri3} for binarized
readouts, or (ii) violation of the equalities~\eqref{eq:NIC_eq}
for continuous readouts.
\end{definition}

\begin{remark}\label{rem:whichfails}
When genuine non-commutativity is detected, the framework does not
specify whether CFD or ENI (or both) fails. In the metacognitive
context, failure of ENI---the claim that performing one type of
evaluation changes the readout that a subsequent different
evaluation would produce---is the more natural interpretation, and
essentially formalizes the intuition that ``introspection disturbs
the state being introspected.'' However, failure of CFD is also
possible and would correspond to the idea that metacognitive
readout values do not exist independently of the evaluation context.
\end{remark}

\section{A minimal concrete model}\label{sec:model}

We now construct a model that exhibits genuine non-commutativity,
with all calculations given in full detail.

\begin{example}[Rotation model on $S^2$]\label{ex:rotation}

Let $\Sspace = S^2 \subset \R^3$, the unit sphere. For simplicity
we work with pure (point-valued) states $\bm{v}\in S^2$. Define
three evaluations using rotations in $SO(3)$:
\begin{align*}
  E_1:&\quad T_1 = R_x(\alpha),\qquad
  \pi_1(\bm{v}) = \tfrac{1}{2}(1+v_x),\\
  E_2:&\quad T_2 = R_y(\beta),\qquad
  \pi_2(\bm{v}) = \tfrac{1}{2}(1+v_y),\\
  E_3:&\quad T_3 = R_z(\gamma),\qquad
  \pi_3(\bm{v}) = \tfrac{1}{2}(1+v_z),
\end{align*}
where $R_x(\alpha)$ denotes rotation by angle $\alpha$ about the
$x$-axis, etc. The readout $\pi_j$ extracts the component along
the $j$-th axis, rescaled to $[0,1]$.

\medskip\noindent\textbf{Choice of parameters.}\quad
We take $\alpha=\beta=\gamma=\pi/3$ and $\bm{v}_0 =
\frac{1}{\sqrt{3}}(1,1,1)$. This state is not an eigenvector
of any of the three rotation axes, ensuring that each evaluation
genuinely rotates the state.

\medskip\noindent\textbf{Rotation matrices.}\quad
The standard rotation matrices are
\begin{align}
  R_x(\alpha) &= \begin{pmatrix}
  1 & 0 & 0\\
  0 & \cos\alpha & -\sin\alpha\\
  0 & \sin\alpha & \cos\alpha
  \end{pmatrix},\label{eq:Rx}\\[4pt]
  R_y(\beta) &= \begin{pmatrix}
  \cos\beta & 0 & \sin\beta\\
  0 & 1 & 0\\
  -\sin\beta & 0 & \cos\beta
  \end{pmatrix},\label{eq:Ry}\\[4pt]
  R_z(\gamma) &= \begin{pmatrix}
  \cos\gamma & -\sin\gamma & 0\\
  \sin\gamma & \cos\gamma & 0\\
  0 & 0 & 1
  \end{pmatrix}.\label{eq:Rz}
\end{align}

With $\alpha=\beta=\gamma=\pi/3$, we have $\cos(\pi/3)=1/2$ and
$\sin(\pi/3)=\sqrt{3}/2$. Set $c = 1/\sqrt{3}\approx 0.5774$.

\medskip\noindent\textbf{Computing $T_1(\bm{v}_0)=
R_x(\pi/3)\,\bm{v}_0$.}\quad
\begin{align*}
  R_x(\pi/3)\begin{pmatrix}c\\c\\c\end{pmatrix}
  &= \begin{pmatrix}
  c\\
  c\cdot\frac{1}{2} - c\cdot\frac{\sqrt{3}}{2}\\[2pt]
  c\cdot\frac{\sqrt{3}}{2} + c\cdot\frac{1}{2}
  \end{pmatrix}
  = \begin{pmatrix}
  c\\[2pt]
  c\,\frac{1-\sqrt{3}}{2}\\[2pt]
  c\,\frac{\sqrt{3}+1}{2}
  \end{pmatrix}.
\end{align*}
Numerically:
$T_1(\bm{v}_0) \approx (0.5774,\; -0.2113,\; 0.7887)$.

\medskip\noindent\textbf{Computing $T_2(\bm{v}_0)=
R_y(\pi/3)\,\bm{v}_0$.}\quad
\begin{align*}
  R_y(\pi/3)\begin{pmatrix}c\\c\\c\end{pmatrix}
  &= \begin{pmatrix}
  c\cdot\frac{1}{2} + c\cdot\frac{\sqrt{3}}{2}\\[2pt]
  c\\[2pt]
  -c\cdot\frac{\sqrt{3}}{2} + c\cdot\frac{1}{2}
  \end{pmatrix}
  = \begin{pmatrix}
  c\,\frac{1+\sqrt{3}}{2}\\[2pt]
  c\\[2pt]
  c\,\frac{1-\sqrt{3}}{2}
  \end{pmatrix}.
\end{align*}
Numerically:
$T_2(\bm{v}_0) \approx (0.7887,\; 0.5774,\; -0.2113)$.

\medskip\noindent\textbf{Computing $T_3(\bm{v}_0)=
R_z(\pi/3)\,\bm{v}_0$.}\quad
\begin{align*}
  R_z(\pi/3)\begin{pmatrix}c\\c\\c\end{pmatrix}
  &= \begin{pmatrix}
  c\cdot\frac{1}{2} - c\cdot\frac{\sqrt{3}}{2}\\[2pt]
  c\cdot\frac{\sqrt{3}}{2} + c\cdot\frac{1}{2}\\[2pt]
  c
  \end{pmatrix}
  = \begin{pmatrix}
  c\,\frac{1-\sqrt{3}}{2}\\[2pt]
  c\,\frac{\sqrt{3}+1}{2}\\[2pt]
  c
  \end{pmatrix}.
\end{align*}
Numerically:
$T_3(\bm{v}_0) \approx (-0.2113,\; 0.7887,\; 0.5774)$.

\medskip\noindent\textbf{Sequential readouts.}\quad
We now compute the six pairwise sequential readouts $C_{ij} =
\pi_j(T_i(\bm{v}_0))$:

\begin{center}
\renewcommand{\arraystretch}{1.3}
\begin{tabular}{@{}ccc@{}}
\toprule
Sequence & $C_{ij}$ (exact) & $C_{ij}$ (numerical)\\
\midrule
$E_1$ then $E_2$:\; $C_{12}=\pi_2(T_1(\bm{v}_0))$
  & $\frac{1}{2}\bigl(1+c\,\frac{1-\sqrt{3}}{2}\bigr)$
  & $\approx 0.3944$\\[3pt]
$E_1$ then $E_3$:\; $C_{13}=\pi_3(T_1(\bm{v}_0))$
  & $\frac{1}{2}\bigl(1+c\,\frac{\sqrt{3}+1}{2}\bigr)$
  & $\approx 0.8944$\\[3pt]
$E_2$ then $E_1$:\; $C_{21}=\pi_1(T_2(\bm{v}_0))$
  & $\frac{1}{2}\bigl(1+c\,\frac{1+\sqrt{3}}{2}\bigr)$
  & $\approx 0.8944$\\[3pt]
$E_2$ then $E_3$:\; $C_{23}=\pi_3(T_2(\bm{v}_0))$
  & $\frac{1}{2}\bigl(1+c\,\frac{1-\sqrt{3}}{2}\bigr)$
  & $\approx 0.3944$\\[3pt]
$E_3$ then $E_1$:\; $C_{31}=\pi_1(T_3(\bm{v}_0))$
  & $\frac{1}{2}\bigl(1+c\,\frac{1-\sqrt{3}}{2}\bigr)$
  & $\approx 0.3944$\\[3pt]
$E_3$ then $E_2$:\; $C_{32}=\pi_2(T_3(\bm{v}_0))$
  & $\frac{1}{2}\bigl(1+c\,\frac{\sqrt{3}+1}{2}\bigr)$
  & $\approx 0.8944$\\
\bottomrule
\end{tabular}
\end{center}

\medskip\noindent\textbf{Verifying order dependence.}\quad
Compare $C_{12}\approx 0.3944$ with $C_{21}\approx 0.8944$.
Under the NIC equality (Corollary~\ref{cor:NIC_cont}), $C_{12}$
should equal $C_{32}$ (both are second-position readouts of
$E_2$). We have $C_{12}\approx 0.3944$ and $C_{32}\approx 0.8944$:
a clear violation. Similarly, $C_{13}\approx 0.8944 \neq C_{23}
\approx 0.3944$, and $C_{21}\approx 0.8944 \neq C_{31}\approx
0.3944$.

\medskip\noindent\textbf{Checking the binary triangle
inequality.}\quad For completeness, we also verify
violation of \eqref{eq:tri1}--\eqref{eq:tri3} under binarization
at $1/2$. For this deterministic (pure-state) model,
$A_j=\mathbf{1}[\pi_j(\bm{v})\geq 1/2]$, so:
\begin{itemize}
  \item At $\bm{v}_0$: $\pi_1=\pi_2=\pi_3 = \frac{1}{2}(1+c)
  \approx 0.789$, so $A_1=A_2=A_3=1$ and all $d_{ij}=0$.
  No violation (trivially satisfied).
\end{itemize}
This illustrates that the continuous NIC equalities are
strictly more informative than the binary inequalities: the binary
version has no power for a deterministic single-state model, while
the continuous equalities are sharply violated.
\end{example}

\begin{remark}\label{rem:model_role}
The rotation model is \emph{not} proposed as a neural mechanism.
Its purpose is to serve as an existence proof: there exist simple,
finite-dimensional, fully computable systems with genuine
non-commutativity in the sense of Definition~\ref{def:genuine}.
One could interpret the three axes as ``dimensions'' of
metacognitive assessment (confidence, error-likelihood,
feeling-of-knowing) and the rotations as the back-action of
performing each type of evaluation---a back-action that ``mixes''
the dimensions. But this interpretation is a heuristic, not a
mechanistic claim.
\end{remark}

We note that the event structure generated by the projection-valued
readouts on $S^2$ under non-commuting rotations is naturally
described by an orthomodular lattice rather than a Boolean algebra
\cite{BirkhoffvonNeumann1936, Svozil1998}. This suggests---but
does not prove in general---that the appropriate replacement for
the Boolean event algebra in the presence of genuine
non-commutativity may be a lattice that is orthomodular but not
distributive.

\section{Empirical paradigm and testable predictions}
\label{sec:empirical}

\subsection{Paradigm sketch}

Consider a perceptual discrimination task (e.g.\ random-dot
motion coherence discrimination \cite{FlemingDaw2017}). On each
trial, after the primary decision, the participant performs
\emph{two} sequential metacognitive evaluations drawn from:
\begin{itemize}
  \item $E_C$: ``Rate your confidence that you were correct''
  (continuous scale, 0--100, rescaled to $[0,1]$).
  \item $E_L$: ``Rate the likelihood that you made an error''
  (0--100).
  \item $E_K$: ``Rate your feeling-of-knowing about the stimulus
  identity'' (0--100).
\end{itemize}
The order of the two evaluations is randomized across trials,
yielding six conditions ($E_C$-then-$E_L$, $E_L$-then-$E_C$,
etc.). Each condition provides an estimate of the corresponding
$C_{ij}$.

\subsection{Primary prediction: NIC equality test}

The most direct test is Corollary~\ref{cor:NIC_cont}. Under a
NIC model, the mean readout of $E_j$ in second position should
not depend on which evaluation came first. Concretely:
\begin{equation}\label{eq:emp_test}
  C_{CL} = C_{KL},\qquad
  C_{CK} = C_{LK},\qquad
  C_{LC} = C_{KC}.
\end{equation}
A statistically significant difference in any of these pairs---
after controlling for fatigue, anchoring, and practice effects
through counterbalancing and hierarchical regression on trial
order, response time, and accuracy---would reject the NIC
hypothesis.

\subsection{Secondary prediction: binary triangle inequality}

If readouts are binarized (e.g.\ ``high confidence'' vs.\ ``low
confidence'' at the median split), the disagreement probabilities
$d_{ij}$ can be estimated across trials, and the triangle
inequalities~\eqref{eq:tri1}--\eqref{eq:tri3} can be tested
directly. As noted in Example~\ref{ex:rotation}, this test has
less power than the continuous equality test but provides a
complementary check with minimal distributional assumptions.

\subsection{Controlling for classical confounds}

The critical question is whether any observed violation of
\eqref{eq:emp_test} could be absorbed by classical covariates.
The NIC framework provides a clear answer: under CFD+ENI,
the equalities must hold \emph{regardless} of what classical
covariates are included, because the joint distribution is
guaranteed to exist by Proposition~\ref{prop:joint}. Including
covariates (response time, accuracy, stimulus difficulty, trial
number, etc.) as additional conditioning variables can only
\emph{sharpen} the test by reducing noise; it cannot make the
NIC equalities hold if the underlying process is genuinely
non-commutative.

Of course, one could always abandon ENI and claim that evaluation
$E_i$ changes the \emph{classical} state in a way that affects
$R_j$; this is precisely what it means to reject the NIC model.
The point is that such state-disruption, if it is
order-dependent, constitutes the operational non-commutativity
that our framework describes.

\section{Discussion}\label{sec:discussion}

\subsection{Relation to Bayesian metacognition}

Bayesian models of confidence \cite{FlemingDaw2017, Pouget2016,
Sanders2016} treat metacognitive evaluation as approximate
inference over an internal generative model. In these models the
evaluation is typically treated as a passive readout: computing
a confidence estimate does not change the underlying decision
variable. Our framework is fully compatible with the Bayesian
approach in this regime. When evaluations have trivial
back-action ($T_E=\id$), ENI holds automatically and all NIC
equalities are trivially satisfied.

The interesting regime is when metacognitive evaluation is not
passive but involves active processing---re-sampling from memory,
reallocating attention, constructing explicit justifications---
that alters the cognitive state. There is growing evidence that
this regime is common: confidence judgments affect subsequent
memory \cite{DoubleBirney2019}, post-decision evidence
accumulation modifies confidence trajectories in ways that
depend on the timing and type of the metacognitive query
\cite{Resulaj2009, vandenBerg2016}, and changes
of mind interact non-trivially with metacognitive reports
\cite{Fleming2012}. Our framework formalizes the structural
consequences of such active evaluation.

The conceptual contribution of this framework lies not in proposing a specific cognitive mechanism, but in providing a structural criterion that distinguishes classical invasive explanations of order effects from irreducibly non-commutative sequential dynamics. Future work could integrate this operational criterion with Bayesian and dynamical models of confidence formation, examine whether empirical violations arise in controlled paradigms, and explore whether similar constraints apply to other forms of higher-order cognition beyond metacognitive judgment.

\subsection{Relation to quantum cognition}

The quantum cognition program \cite{BusemeyerBruza2012,
PothosBusemeyer2013, Khrennikov2010} uses quantum probability
to model various cognitive phenomena including order effects in
judgments and decisions \cite{WangBusemeyer2013, Trueblood2011}.
Our work shares the mathematical toolkit of non-commutative
algebras and the general strategy of deriving testable
inequalities. Key differences include:

\begin{enumerate}
  \item We focus on metacognitive evaluations rather than
  first-order judgments. The back-action of self-evaluation
  on the evaluated state is, we argue, a more natural setting
  for non-commutativity than preference reversals or
  conjunction fallacies.

  \item We do not presuppose a Hilbert-space formalism. Our
  definitions are set-theoretic and our NIC assumptions
  (CFD+ENI) are stated independently of any quantum apparatus.
  The rotation model of Section~\ref{sec:model} happens to
  embed in a Hilbert space, but this is a property of the
  example, not of the general theory.

  \item We explicitly separate the question ``are there order
  effects?'' from ``are they genuinely non-commutative?'' via
  the NIC criterion.
\end{enumerate}

\subsection{Relation to introspection-as-disturbance}

The philosophical idea that introspection modifies the state
being introspected has a long history
\cite{NisbettWilson1977, Schwitzgebel2008} and has been
discussed in cognitive psychology under headings like ``verbal
overshadowing'' and reactive self-monitoring
\cite{Schooler2002, Proust2013}. Our framework can be read as
providing a formal backbone for this intuition: the back-action
maps $T_E$ are the mathematical expression of
introspection-induced disturbance. What we add is a
criterion---the NIC equalities---for when such disturbance is
non-trivial in the specific sense that it cannot be accounted for
by any classical state-expansion satisfying CFD and ENI.

\subsection{Limitations}

Several limitations deserve explicit mention.

\begin{enumerate}
  \item \textbf{NIC vs.\ weaker classical models.}\quad
  Our inequalities certify the failure of CFD+ENI jointly.
  One might consider weaker forms of classicality that
  abandon ENI but retain some form of hidden-variable
  structure. Whether useful constraints can be derived
  against such weaker models is an open question.

  \item \textbf{Model-specificity.}\quad
  The rotation model of Section~\ref{sec:model} is a proof
  of concept. Fitting it to real metacognitive data would
  require specifying which ``dimensions'' correspond to which
  cognitive variables---a non-trivial modeling step that
  we have not attempted here.

  \item \textbf{Noise and power.}\quad
  In practice, metacognitive reports are noisy, and violations
  of the NIC equalities may be small. Adequate power will
  require either large samples or paradigms that maximize
  the back-action of evaluation. Adaptive experimental designs
  that estimate the $C_{ij}$ online and allocate trials to
  the most informative conditions could help.

  \item \textbf{Stationarity.}\quad
  We assume the initial state $\mu$ is the same across
  trials. In practice, internal states fluctuate. Violation
  of stationarity could mimic or mask genuine
  non-commutativity. Within-trial designs (where both
  evaluations happen on the same trial) mitigate this, but
  introduce concerns about fatigue and response dependencies.
\end{enumerate}

\section{Conclusion}\label{sec:conclusion}

We have developed an operational framework for studying the
sequential structure of metacognitive evaluation. The
contributions are at three levels:

\begin{enumerate}
  \item \textbf{Formal separation.}\quad
  Metacognitive evaluations are decomposed into a back-action
  component ($T_E$) and a readout component ($\pi_E$). This
  separation cleanly captures the intuition that self-evaluation
  can disturb the state being evaluated.

  \item \textbf{Testable criterion for genuine
  non-commutativity.}\quad
  Under two explicit assumptions (CFD and ENI), the existence
  of a joint distribution over all readout values implies
  equalities~\eqref{eq:NIC_eq} that are empirically testable.
  Violation rejects the joint existence and certifies genuine
  non-commutativity.

  \item \textbf{Existence proof and experimental sketch.}\quad
  A minimal rotation model demonstrates that genuine
  non-commutativity is achievable in a simple
  finite-dimensional system, and a behavioral paradigm
  involving sequential metacognitive ratings is outlined.
\end{enumerate}

We emphasize what the framework does \emph{not} claim: it does
not assert that neural systems implement quantum physical
dynamics, nor does it privilege any particular algebraic
replacement for the Boolean event structure. The non-commutative
structure identified here is a descriptive feature of certain
sequential evaluation processes. Whether it is empirically
realized in human metacognition is an open question that the
proposed experimental paradigm is designed to answer.

\appendix
\section{Detailed derivation of NIC constraints}
\label{app:derivation}

We provide a self-contained derivation of the NIC equalities and
binary inequalities.

\subsection{Setup}

Consider a set of evaluations $\{E_1,\ldots,E_n\}$ and a fixed
initial state $\mu$. On each trial, the agent is prepared in
state $\mu$ and two evaluations $E_i, E_j$ are performed
sequentially. The experimenter observes the readout pair
$(r_i, r_j)$ where $r_i = \pi_i(\mu)$ (first-position readout)
and $r_j = \pi_j(T_i(\mu))$ (second-position readout).

\subsection{NIC equalities (continuous)}

Under CFD, there exist definite values $R_1(\omega),\ldots,
R_n(\omega)$ for each trial $\omega$ in a sample space $\Omega$.
Under ENI, the observed second-position readout satisfies
\[
  r_j = R_j(\omega) \quad\text{for all } i,
\]
i.e.\ it equals the pre-determined value regardless of which
evaluation preceded it. Therefore
\[
  \E[r_j \mid E_i\text{ first}] = \E[R_j] =
  \E[r_j \mid E_k\text{ first}]
\]
for all $i,k$, which is~\eqref{eq:NIC_eq}.

\subsection{Binary inequalities}

Binarize: $A_j(\omega) = \mathbf{1}[R_j(\omega)\geq 1/2]$.
Since all $A_j$ are defined on the same probability space
(by CFD), for any three evaluations:
\begin{align*}
  P(A_1\neq A_3)
  &= P(A_1\neq A_3,\, A_1=A_2) + P(A_1\neq A_3,\, A_1\neq A_2)\\
  &\leq P(A_2\neq A_3) + P(A_1\neq A_2),
\end{align*}
where the inequality in the first term uses: if $A_1=A_2$ and
$A_1\neq A_3$, then $A_2\neq A_3$. This gives $d_{13}\leq
d_{12}+d_{23}$; the other forms follow by permutation. \qed

\subsection{Relation to Fine's theorem}

Fine's theorem \cite{Fine1982} states that, for dichotomic
observables, the existence of a joint distribution reproducing
all pairwise marginals is equivalent to a set of Bell-type
inequalities. Our binary inequalities
\eqref{eq:tri1}--\eqref{eq:tri3} are the $n=3$ case of Fine's
conditions, specialized to the Hamming distance formulation. The
continuous equalities~\eqref{eq:NIC_eq} do not follow from Fine
directly but from the stronger ENI assumption, which implies
not just marginal compatibility but actual value-determinism of
the second-position readout.

\section*{Acknowledgments}

Language editing and stylistic refinement were assisted by a large language model (ChatGPT 5.2). All conceptual development, formal assumptions, definitions, results, proofs, modeling, and reference verification were performed and reviewed by the author, who assumes full responsibility for the content.



\begin{thebibliography}{99}

\bibitem{NelsonNarens1990}
T.~O. Nelson and L.~Narens,
``Metamemory: A theoretical framework and new findings,''
in \emph{The Psychology of Learning and Motivation}, Vol.~26,
ed.\ G.~H.~Bower (Academic Press, 1990), pp.~125--173.

\bibitem{NelsonNarens1994}
T.~O. Nelson and L.~Narens,
``Why investigate metacognition?''
in \emph{Metacognition: Knowing about Knowing},
eds.\ J.~Metcalfe and A.~P.~Shimamura (MIT Press, 1994),
pp.~1--25.

\bibitem{FlemingLau2014}
S.~M. Fleming and H.~C. Lau,
``How to measure metacognition,''
\emph{Frontiers in Human Neuroscience} \textbf{8}, 443 (2014).

\bibitem{FlemingDaw2017}
S.~M. Fleming and N.~D. Daw,
``Self-evaluation of decision-making: A general Bayesian framework
for metacognitive computation,''
\emph{Psychological Review} \textbf{124}, 91--114 (2017).

\bibitem{ManiscalcoLau2012}
B.~Maniscalco and H.~Lau,
``A signal detection theoretic approach for estimating metacognitive
sensitivity from confidence ratings,''
\emph{Consciousness and Cognition} \textbf{21}, 422--430 (2012).

\bibitem{Pouget2016}
A.~Pouget, J.~Drugowitsch, and A.~Kepecs,
``Confidence and certainty: distinct probabilistic quantities for
different goals,''
\emph{Nature Neuroscience} \textbf{19}, 366--374 (2016).

\bibitem{Sanders2016}
J.~I. Sanders, B.~Hangya, and A.~Kepecs,
``Signatures of a statistical computation in the human sense of
confidence,''
\emph{Neuron} \textbf{90}, 499--506 (2016).

\bibitem{BusemeyerBruza2012}
J.~R. Busemeyer and P.~D. Bruza,
\emph{Quantum Models of Cognition and Decision}
(Cambridge University Press, 2012).

\bibitem{PothosBusemeyer2013}
E.~M. Pothos and J.~R. Busemeyer,
``Can quantum probability provide a new direction for cognitive
modeling?''
\emph{Behavioral and Brain Sciences} \textbf{36}, 255--274
(2013).

\bibitem{WangBusemeyer2013}
Z.~Wang and J.~R. Busemeyer,
``A quantum question order model supported by empirical tests of
an \emph{a priori} and precise predictions,''
\emph{Topics in Cognitive Science} \textbf{5}, 689--710 (2013).

\bibitem{Khrennikov2010}
A.~Khrennikov,
\emph{Ubiquitous Quantum Structure: From Psychology to Finance}
(Springer, 2010).

\bibitem{Trueblood2011}
J.~S. Trueblood and J.~R. Busemeyer,
``A quantum probability account of order effects in inference,''
\emph{Cognitive Science} \textbf{35}, 1518--1552 (2011).

\bibitem{Fine1982}
A.~Fine,
``Hidden variables, joint probability, and the Bell inequalities,''
\emph{Physical Review Letters} \textbf{48}, 291--295 (1982).

\bibitem{Bell1964}
J.~S. Bell,
``On the Einstein Podolsky Rosen paradox,''
\emph{Physics Physique Fizika} \textbf{1}, 195--200 (1964).

\bibitem{LeggettGarg1985}
A.~J. Leggett and A.~Garg,
``Quantum mechanics versus macroscopic realism: Is the flux there
when nobody looks?''
\emph{Physical Review Letters} \textbf{54}, 857--860 (1985).

\bibitem{EmaryLambert2014}
C.~Emary, N.~Lambert, and F.~Nori,
``Leggett--Garg inequalities,''
\emph{Reports on Progress in Physics} \textbf{77}, 016001 (2014).

\bibitem{BirkhoffvonNeumann1936}
G.~Birkhoff and J.~von Neumann,
``The logic of quantum mechanics,''
\emph{Annals of Mathematics} \textbf{37}, 823--843 (1936).

\bibitem{Svozil1998}
K.~Svozil,
\emph{Quantum Logic} (Springer, 1998).

\bibitem{NisbettWilson1977}
R.~E. Nisbett and T.~D. Wilson,
``Telling more than we can know: Verbal reports on mental
processes,''
\emph{Psychological Review} \textbf{84}, 231--259 (1977).

\bibitem{Schooler2002}
J.~W. Schooler,
``Re-representing consciousness: Dissociations between experience
and meta-consciousness,''
\emph{Trends in Cognitive Sciences} \textbf{6}, 339--344 (2002).

\bibitem{Schwitzgebel2008}
E.~Schwitzgebel,
``The unreliability of naive introspection,''
\emph{Philosophical Review} \textbf{117}, 245--273 (2008).

\bibitem{Proust2013}
J.~Proust,
\emph{The Philosophy of Metacognition: Mental Agency and
Self-Awareness} (Oxford University Press, 2013).

\bibitem{DoubleBirney2019}
K.~S. Double and D.~P. Birney,
``Reactivity to confidence ratings in older and younger adults,''
\emph{Metacognition and Learning} \textbf{14}, 165--182 (2019).

\bibitem{Ais2016}
J.~Ais, A.~Zylberberg, P.~Barttfeld, and M.~Sigman,
``Individual consistency in the accuracy and distribution of
confidence judgments,''
\emph{Cognition} \textbf{146}, 377--386 (2016).

\bibitem{Resulaj2009}
A.~Resulaj, R.~Kiani, D.~M. Wolpert, and M.~N. Shadlen,
``Changes of mind in decision-making,''
\emph{Nature} \textbf{461}, 263--266 (2009).

\bibitem{vandenBerg2016}
R.~van den Berg, K.~Anandalingam, A.~Zylberberg, R.~Kiani,
M.~N. Shadlen, and D.~M. Wolpert,
``A common mechanism underlies changes of mind about decisions
and confidence,''
\emph{eLife} \textbf{5}, e12192 (2016).


\bibitem{Fleming2012}
S.~M. Fleming, R.~J. Dolan, and C.~D. Frith,
``Metacognition: computation, biology and function,''
\emph{Philosophical Transactions of the Royal Society B}
\textbf{367}, 1280--1286 (2012).

\end{thebibliography}
\end{document}